	\newcommand{\blind}{0}
    \renewcommand\section{\@startsection {section}{1}{\z@}%
                                       {-3.5ex \@plus -1ex \@minus -.2ex}%
                                       {2.3ex \@plus.2ex}%
                                       {\normalfont\fontfamily{cmr}\fontsize{16}{19}\bfseries}}
    \renewcommand\subsection{\@startsection{subsection}{2}{\z@}%
                                         {-3.25ex\@plus -1ex \@minus -.2ex}%
                                         {1.5ex \@plus .2ex}%
                                         {\normalfont\fontfamily{cmr}\fontsize{14}{17}\bfseries}}
    \renewcommand\subsubsection{\@startsection{subsubsection}{3}{\z@}%
                                        {-3.25ex\@plus -1ex \@minus -.2ex}%
                                         {1.5ex \@plus .2ex}%
                                         {\normalfont\normalsize\fontfamily{cmr}\fontsize{14}{17}\selectfont}}
\newcommand{\tikzmark}[1]{\tikz[overlay,remember picture] \node (#1) {};}
\newcommand*{\AddNote}[4]{%
    \begin{tikzpicture}[overlay, remember picture]
        \draw [decoration={brace,amplitude=0.5em},decorate,ultra thick,red]
            ($(#3)!(#1.north)!($(#3)-(0,1)$)$) --  
            ($(#3)!(#2.south)!($(#3)-(0,1)$)$)
                node [align=center, text width=0.6cm, pos=0.5, anchor=west] {#4};
    \end{tikzpicture}
}%
\DeclareMathOperator*{\argmax}{arg\,max}
\newcommand \EE {\mathbb{E}}
\newcommand \RR {\mathbb{R}}
\newcommand{\norm}[1]{\left\lVert#1\right\rVert}
\newcommand{\bmn}[1] {\bm{\mathrm{#1}}}
\newcommand{\T}{^\top}
\newcommand{\by}{\times}
\newtheorem{theorem}{Theorem}
\newtheorem{lemma}{Lemma}
\begin{document}
		
		\def\spacingset#1{\renewcommand{\baselinestretch}%
			{#1}\small\normalsize} \spacingset{1}
		
		\if0\blind
		{
			\title{\bf Adversarial Client Detection via Non-parametric Subspace Monitoring in the Internet of Federated Things}
			\author{Xianjian Xie $^a$, Xiaochen Xian $^b$, Dan Li $^c$ and Andi Wang $^d$ \\
			$^a$ School of Computing and Augmented Intelligence, Arizona State University, Tempe, AZ \\
             $^b$ School of Industrial and Systems Engineering, University of Florida, Gainsville, FL \\ 
             $^c$ Department of Industrial Engineering, Clemson University, Clemson, SC \\ 
             $^c$ School of Manufacturing Systems and Networks, Arizona State University, Tempe, AZ}
           \date{}
        			\maketitle   
		} \fi
		
		\if1\blind
		{

            \title{\bf Adversarial Client Detection via Non-parametric Subspace Monitoring in the Internet of Federated Things}
            
			\author{Author information is purposely removed for double-blind review}
\bigskip
			\bigskip
			\bigskip
			\begin{center}
				{\LARGE\bf Adversarial Client Detection via Non-parametric Subspace Monitoring in the Internet of Federated Things}
			\end{center}
			\medskip
		} \fi
		\bigskip

\begin{abstract}
The Internet of Federated Things (IoFT) represents a network of interconnected systems with federated learning as the backbone, facilitating collaborative knowledge acquisition while ensuring data privacy for individual systems. The wide adoption of IoFT, however, is hindered by security concerns, particularly the susceptibility of federated learning networks to adversarial attacks. In this paper, we propose an effective non-parametric approach FedRR, which leverages the low-rank features of the transmitted parameter updates generated by federated learning to address the adversarial attack problem. Besides, our proposed method is capable of accurately detecting adversarial clients and controlling the false alarm rate under the scenario with no attack occurring. Experiments based on digit recognition using the MNIST datasets validated the advantages of our approach. 


\end{abstract}
	\noindent%
	{\it Keywords:} Federated Learning; Data-stream monitoring; Non-parametric method. 

	\spacingset{1.5} 

\section{Introduction}

Recently, interconnected systems with the capability of collecting data from sensors during their operation and exchanging them over the Internet become increasingly common. 
Internet of Federated Things (IoFT, \citet{kontar2021internet}), regarded as the future of the Internet of things, refers to the inter-connected systems where the individual systems have computational capabilities and collectively gain knowledge and optimize their operations. Currently, the IoFT sees its application in many areas, from distributed manufacturing systems to autonomous transportation systems to electric power systems \citep{rahman2023icn, alnajar2023tactile, hegiste2022application}.

The backbone of the IoFT is Federated Learning (FL),  a category of machine learning methods where the individual systems cooperatively develop a collective machine learning model using the data collected locally by individual systems, without sharing these data directly with each other. The coordination of this process is facilitated by a central system, which assumes the primary role of collecting and aggregating the transmitted updates from individual systems. Transmitted updates capture the modifications made to the model's parameters based on locally collected data of each individual system. The benefit of FL is that all participating systems have access to the model developed using data collected from all systems while keeping the data of each individual system confidential. 

One barrier to the wide adoption of IoFT is the security concerns \citep{azad2021preventive, sharma2019survey}.
Among multiple security issues, this article addresses the following specific one - the FL networks are susceptible to adversarial attacks from the individual systems, where the adversarial attacks refer to malicious behaviors to compromise the fidelity of the model \citep{bhagoji2019analyzing}. 
Given the decentralized nature of FL, each system participating in the FL process may pose as a potential adversary. Malicious behaviors may manifest in various forms, such as modifying local data, applying inappropriate hyper-parameters during local computing, and transmitting wrong information to the central system \citep{mothukuri2021survey}. 


As reviewed in the next section, there are two major approaches to defend against adversarial attacks: alleviating the effect of adversarial participants through robust learning and identifying the adversarial participants through anomaly detection. This article considers the second approach, which  is able to eradicate the root cause of adversarial attacks. Essentially, the identification of the adversarial attacks in IoFT naturally connects with the problem of data stream monitoring. Determining whether a system within the IoFT is under attack relies on the transmitted updates from individual systems, necessitating concurrent monitoring of these updates. As a data stream monitoring problem, the identification of adversarial attacks is to detect the changes in the transmitted updates quickly and automatically after the attack occurs. Nevertheless, this data stream monitoring problem has several unique characteristics and challenges. 

First, when the underlying models of the FL process are deep learning models, the information transmitted among the systems is typically high dimensional due to the presence of tens of thousands of model parameters \citep{rodriguez2023survey}. 
Although monitoring algorithms are developed for observations of similar sizes such as images and multiple profiles \citep{liu2015adaptive, wang2018spatial}, all of them leverage the specific structures of the observations for effective monitoring, as will be reviewed in the next section. How to utilize the structure of the transmitted updates of the IoFT is still an open problem.  

Second,  an adversarial detection method that is applicable practically would require a limit on the false alarm rate. This false alarm rate is described by the average run length (ARL) of the chart when no adversarial attack occurs. In statistical process monitoring, the in-control ARL can be specified with a proper value of the control limit, based on the proper assumptions on the probabilistic model of the data streams, such as Poisson \citep{wang2017statistical} or Normal \citep{mei2010efficient}. However, this requirement is very hard to be satisfied for adversarial attack identification in IoFT, where the systems collectively solve an optimization problem. Essentially, the transmitted updates among systems are the intermediate values of the iterative algorithm that solves the global optimization problem, instead of the data that follows specific models. 
There is no convincing probabilistic model that describes the underlying distribution of the transmitted updates. Therefore, it is critical yet challenging to develop an adversarial client detection algorithm with a specific in-control ARL when a prescribed control limit is given. 

In this paper, we will address the aforementioned challenges in detecting adversarial participating systems within IoFT with the following two strategies and develop a monitoring system, Federated adversarial client detection with the Ranks of Residuals (FedRR). 


\begin{enumerate}
    \item  We will utilize the following observation from the FL literature \citep{azam2021recycling, konevcny2016federated} for developing monitoring schemes: for deep learning models with highly redundant model parameters, the linear subspace spanned by the model parameter updates (the sum of gradients of the loss function over multiple epochs) generated across the stochastic gradient descent (SGD) algorithm during the FL process is typically low dimensional. 
    Note that our study validates a new usage of the low-dimensional structure of the transmitted updates in FL, 
    different from the existing usage of reducing the communication overhead in the FL process. 

\item To address the lack of an appropriate probabilistic model for the distribution of transmitted updates and develop charts with a limited false alarm rate, our method leverages an effective  non-parametric statistics approach: transforming the transmitted updates after the dimensional reduction into rank statistics, such that the comparison of the participating systems can be described statistically for evaluating the adversarial attack behavior. With this idea, we develop a non-parametric data stream monitoring approach for the ranks of the participating systems.  

\end{enumerate}

In this paper, we focus on a centralized FL setting, where a central computing system (server) communicates with the participating systems (clients) and coordinates them in sequential steps of the learning algorithms. In this setup, the central server undertakes the task of adversarial attack detection using the information retrieved from individual clients. We implement our methods within the most commonly used FL framework FedAvg \citep{mcmahan2017communication}, showing that our methods are easily applicable to existing FL frameworks. We will focus on untargeted attacks that aim to impair the overall learning model performance \citep{fang2020local}, as opposed to altering the model's behavior exclusively for specifically targeted machine-learning subtasks while maintaining good accuracy for the primary task \citep{bouacida2021vulnerabilities}. The untargeted attacks include both data poisoning attacks, which involve the modification of local data for one or more clients, and model poisoning attacks \citep{rodriguez2023survey}, which directly manipulate the model parameter updates transmitted by clients to the server, as illustrated in Figure~\ref{attack}. 
However, we shall see that the overall strategies from the FedRR approach can be extended to the general adversarial client detection problem in IoFT with other FL backbones.


\begin{figure}[!htbp]
  \centering
  \includegraphics[scale=0.28]{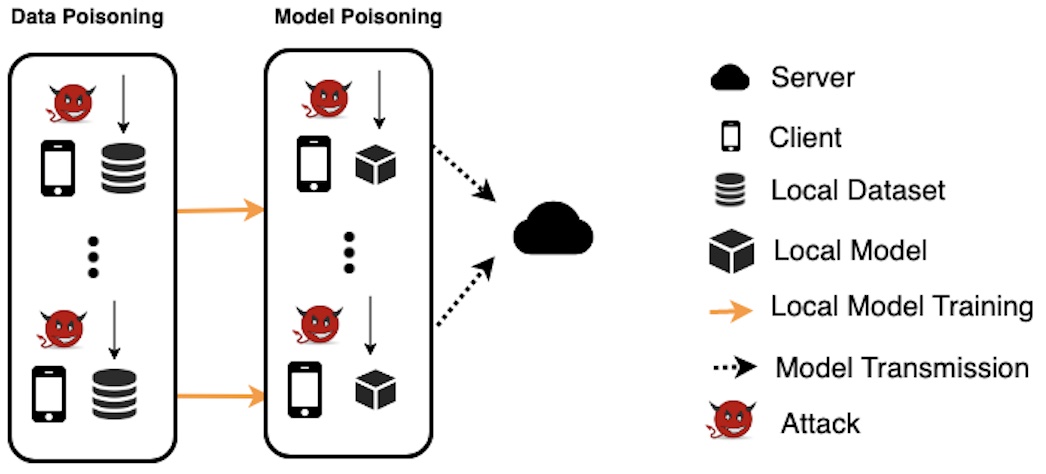}
  \caption{Type of Attacks}
  \label{attack}  
\end{figure}



We summarize our contributions as follows:
\begin{itemize}
  \item Our study introduces an effective non-parametric method that leverages the low-rank features of the space of transmitted updates generated by FL to address the adversarial client detection problem.
  \item Unlike existing client selection methods that rely on classification, our proposed monitoring technique is capable of accurately detecting adversarial clients with precise ARL under no-attack scenarios 
  \item To substantiate the effectiveness of our approach in adversarial client detection, our method is validated with the experiments using the MNIST dataset.
  
\end{itemize}

The remaining part of the article is organized as follows. In Section 2, we briefly introduce the background and review literature related to the FedAvg algorithm, methods of defending adversarial attacks, and big data monitoring. Section 3 introduces our problem setup and the methods in detail, and Section 4 provides experimental studies on digit recognition. Finally, Section 5 concludes the article.

\section{Background and Literature Review}

The background and literature review includes three components: (1) the FL framework and the FedAvg algorithm, a widely-adopted algorithm for the FL process, based on which we develop our FedRR algorithm; (2) the existing approaches for defending adversarial attacks in IoFT, using robust aggregation function and anomaly detection approach; (3) the existing high-dimensional data monitoring approaches. 

\subsection{General setup of the centralized federated learning and the FedAvg algorithm}\label{sec:fedavg}


A centralized IoFT typically comprises two primary components: a group of clients  that maintain and process confidential datasets  and a central server that aggregates and coordinates the information from the clients. The workflow of the centralized FL essentially involves the following four stages: (1) the server transmits the current model parameters to each client in the IoFT, (2) each client updates the parameters individually with its local dataset, (3) each client transmits its updated parameters back to the central server, and (4) the server aggregates the updated parameters of all clients and computes the new model parameters \citep{rodriguez2022dynamic}. Figure~\ref{workflow} illustrates the general FL workflow, and a detailed review of the recent advances of FL can be found in \citet{kairouz2021advances}. 




\begin{figure}[!htbp]
  \centering
  \includegraphics[scale=0.15]{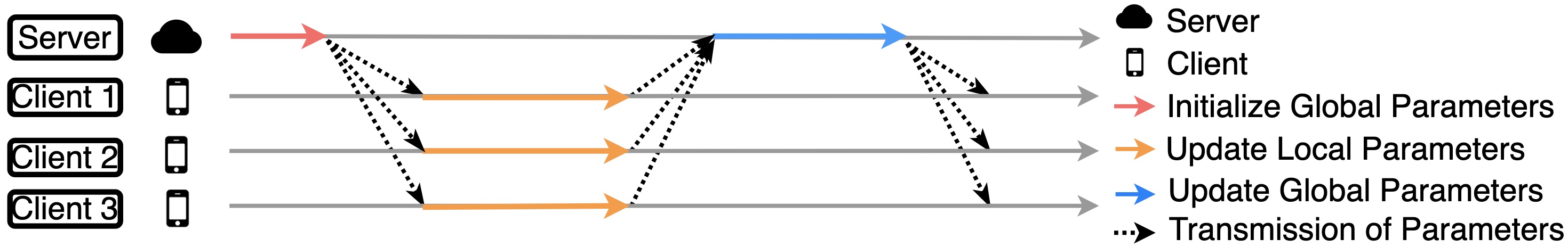}
  \caption{Federated Learning Workflow}
  \label{workflow}  
\end{figure}

A specific implementation of the above FL framework is FedAvg \citep{mcmahan2017communication}. Assume that there are $K$ clients participating in the learning process and that  each  client $k = 1,\ldots, K$ in communication round $t$ updates its local parameters 
using its own dataset $ \mathcal D_{t,k} = \{\bmn x_{t,k}^{(i)}, y_{t,k}^{(i)}\}_{i=1}^n$. They collaboratively solve the following machine learning problem \ref{eq:fedavg}:

\begin{equation}
\label{eq:fedavg}
\text{minimize} \quad F(\bmn w) =  \frac{1}{K} \sum_{k=1}^{K}\EE_{(\bmn x,y)\sim P_k}[ l(\bmn x,y;\bmn w)]. 
\end{equation}


\noindent In this representation, $\bmn w$ is the vector of the model parameters, and  $l(\bmn x,y;\bmn w)$ is the loss function defined by the specific machine learning model on a data sample $(\bmn x,y)$, and $P_k$ is the population of the data obtained from client $k$. In the $t$-th round, the server transmits the current model parameters $\bmn w_{t-1}$ (either the aggregation of the last round when $t>1$, or the initialized model parameters when $t=1$) to $K$ clients. Each client $k$ updates the received model parameters to $\bmn w_{t}^{(k)}$ using the SGD scheme. Finally, the server aggregates all clients’ updates and generates the updated global model parameters by $\bmn w_{t} = \frac 1 {K}\sum_{k=1}^{K} \bmn w_{t}^{(k)}$. The FedAvg is outlined in Algorithm \ref{FedAvg} with a total of $T$ rounds of communication. In this article, we will base our FedRR procedure on FedAvg algorithm. 

\RestyleAlgo{ruled}
\SetKwProg{Init}{Initialization:}{}{}
\SetKwProg{Loc}{Local Update:}{}{}
\SetKwProg{Agg}{Global Aggregation: }{}{}



	\spacingset{1.0} 
\LinesNumbered
\begin{algorithm}
\label{FedAvg}
\DontPrintSemicolon
\KwIn{The local dataset $\mathcal D_{t,k} = \{\bmn x_{t,k}^{(i)},y_{t,k}^{(i)}\}_{i=1}^{n}$ collected by client $k$ of the communication round $t$. Initial value of the model parameters $\bmn w_0$}
\For{$t = 1, \ldots, T$}{
The server transmits the current $\bmn w_{t-1}$ to all clients $k = 1,..., K$\;
\Loc{}{
\For{$k = 1,\ldots,K$}{
      Server $k$ updates the model parameters based on $\bmn w_{t-1}$ and data $\mathcal D_{t,k}$ using the SGD procedure and obtains $\bmn w_{t}^{(k)}$\;
      The client $k$ transmits the updated parameters $\bmn w_{t}^{(k)}$ to the server\; 
    }
}
\Agg{}{
The server aggregates the updated parameters from all clients $\bmn w_{t} = \frac 1 K \sum_{k=1}^{K} \bmn w_{t}^{(k)}$\;
}
}
The output of the FedAvg procedure is $\bmn w = \bmn w_T$\;
\caption{FedAvg Algorithm \citep{mcmahan2017communication}}
\end{algorithm}

	\spacingset{1.5} 

\subsection{Defenses against Adversarial Attacks}

In cyber security research, there are a plethora of defensive methods relying on data inspection technology. However, they cannot be implemented in FL due to the absence of central server access to local client data \citep{rodriguez2022dynamic}. Two commonly used defenses against adversarial attacks for FL systems are robust aggregators and anomaly detection.

\subsubsection{Adversarial attack defense with Robust Aggregators }



A robust aggregator refers to the use of an aggregation function that differs from the arithmetic mean used in FedAvg. The intuitive idea is to decrease the weights of clients during the aggregation step if they generate extreme updates, to protect the fidelity of the model parameters. \citet{yin2018byzantine} suggests using median and trimmed-mean as the aggregation function, as they are less sensitive to sporadic faulty updates from adversarial clients. 
Similar approaches include \citet{wu2020federated} and \citet{chen2017distributed} that use geometric median, \citet{blanchard2017machine} that discards the clients generating outlying updates, and \citet{sun2019can} that clips model parameter updates in the global aggregation if the norm exceeds a threshold. \citet{guerraoui2018hidden} introduces a detailed multi-step client selection criterion applicable to multiple aggregation rules, that guarantees rapid convergence to the true solution under certain assumptions of the adversarial situations.





Some methods enhance the robust aggregator by  accounting for the updates from the historical rounds. For example, the Adaptive Federated Averaging (AFA) method \citep{munoz2019byzantine} uses Hidden Markov Models (HMM) to model the updates of different clients, assess their performance, and assign appropriate weights to the updates based on their respective performance
during aggregation. However, the HMM model may not adequately capture how the updates of the parameters from each client vary during the FL process, given the diverse form of the objective function as well as the arbitrary selection of the initial parameter values. 



Lastly, \cite{park2021sageflow} suggests Sageflow, which weighs each client's model updates according to how well it performs on a public dataset 
and assigns weights to each model update accordingly. However, the requirement for a public dataset to be available may not be practical in certain scenarios. 

\subsubsection{Adversarial attack defense with anomaly detection approach}

A common problem with robust aggregators is that they cannot completely remove the impact of adversarial attacks. 
Anomaly detection strategies aim to accurately identify adversarial clients and exclude them from model aggregation. The key idea is to identify whether the client is adversarial based on its transmitted parameters. 
\citet{andreina2021baffle} proposes that each client individually evaluates whether the updates improve the model performance using its private data and votes to determine whether the current update is acceptable.
\citet{cao2019understanding} constructs a graph by calculating the pairwise Euclidean distance between model updates and then uses the Bron–Kerbosch algorithm \citep{bron1973algorithm} to identify the maximum clique. 
\citet{zhao2020pdgan} proposes to reconstruct the datasets from the model on the server using the model updates from the client, and then identify the adversarial clients by verifying the model accuracy using these datasets.
Common issues of the above approaches include significant computational and communication overhead, and that these anomaly detection approaches are based on flexible decision rules that forbid accurate evaluation of detection error rates.  




\subsection {High dimensional data monitoring approaches} \label{sec:hdmonitoring}
As we mentioned in the introduction, there is a close connection between the identification of adversarial clients and high-dimensional data monitoring, as transmitted updates between clients and the server are high-dimensional when first-order methods are used to optimize the deep learning models. A key aspect is to leverage the structure of both the data under the normal state and the anomalies. For example, \citet{paynabar2016change} studies the monitoring of multiple continuous functions with the mean shift; \citet{yan2018real} studies the monitoring of sparse anomalies on the smooth streaming profiles, and \citet{liu2015adaptive} and \citet{wang2018spatial} aim to find the sparse anomalies on independent data streams. Despite the different models and methods, these methods leverage the distinct structure of the data or the anomalies to design monitoring procedures and overcome the curse of dimension. While these assumptions in the above literature are valid in applications such as the monitoring of images or  multiple functional signals in manufacturing systems, the transmitted updates between the clients and the central server in the IoFT do not have these features. In order to develop an appropriate monitoring approach, the identification and utilization of the structure among these transmitted updates is the cornerstone.  

\section{Methodology}

Our goal of the research is to develop an effective method to identify adversarial clients among all $K$ clients, following the FedAvg setup introduced in Section~\ref{sec:fedavg}, so that we can exclude their updates in the aggregation step and avoid compromising the final performance of our model. 

We posit the fundamental assumption of independent and identically distributed (IID) data across all clients while acknowledging that adversarial clients form a minority within the participating systems. Moreover, our assumption narrows down the range of actions undertaken by adversarial clients to two distinct categories: data poisoning attacks and model poisoning attacks.

As we detailed in the introduction, the FedRR method is composed of two major considerations: (1) decreasing the dimension of the high-dimensional parameters through mapping them to a few summary statistics, and (2) constructing a non-parametric monitoring procedure that aggregates the updates from all clients. They will be introduced in Section~\ref{sec:low} and Section~\ref{sec:cusum}. In Section~\ref{sec:proc}, we present the overall monitoring structure and discuss the possible extensions of the method. 

\subsection {Leveraging the low dimensional structure of the data transmission} \label{sec:low}



Recall that our aim is to develop an adversarial client detection approach on whether transmitted updates
are generated by adversarial clients. In the general setup of IoFT, this task should be completed by the central server after they receive the updates from the clients before the aggregation. More specifically, when FedAvg algorithm is employed, the identification of the adversarial clients should be made after the central server receives the $\bmn w_t^{(k)}\in\RR^p$ from all clients $k=1,\ldots,K$ at time $t$. As introduced in Section~\ref{sec:hdmonitoring}, $\Delta \bmn w_t^{(k)} = \bmn w_t^{(k)} - \bmn w_{t-1} $ is high-dimensional and the effective monitoring and anomaly detection of $\Delta \bmn w_t^{(k)}$ hinge upon the low dimensional structure of it. 

To describe the low-dimensional structure of $\Delta \bmn w_t^{(k)}$, we use the results from \citet{azam2021recycling}, which states that  \emph{the gradients generated from the SGD process of an FL process typically stay in a low-dimensional subspace due to the over parametrization of the network}. This result has also been corroborated by other literature \citep{konevcny2016federated}. This result indicates that very few principal components of  $\text{span}\{\Delta \bmn w_t^{(k)}: k=1,\ldots, K; t=1,2,\ldots, T\}$ can account for a majority of the variability of the data, which motivates us to monitor the component 
of a $\Delta \bmn w_t^{(k)}$ perpendicular to this low-dimensional principal subspace, as an update \textit{not} calculated from the real data or \textit{not} calculated from the correct optimization procedure will be significantly outside this latent subspace. 

Based on  this idea, we divide the monitoring procedure into two phases: 
\begin{itemize}

\item \textbf{Phase I.} We assume that no adversarial attacks appear in the first $T_0$ rounds of communication. From the updates of these rounds, we therefore can estimate the updates' principal subspace. The central server collects the updates from all $K$ clients and obtain $\Delta\bmn W = [\Delta\bmn W_1\ \cdots \ \Delta\bmn W_{T_0}]\in \RR^{p \times T_0 K}$ where  $\Delta\bmn W_t = [\Delta\bmn w_t^{(1)}\ldots \Delta\bmn w_t^{(K)}]$. Then we  obtain $\bmn B\in \mathbb{R}^{p \times q}$, whose column vectors $ [\bmn b_1 \ldots \bmn b_q] $ are the first $q$ orthonormal principal components of all columns in $\Delta\bmn W$. 

\item \textbf{Phase II. } After the first $T_0$ round, we calculate a simple index that represents whether an update from a client is far away from the principal subspace. For each client $k$, we project $\Delta \bmn w_t^{(k)} = \bmn w_t^{(k)} - \bmn w_t $ onto the subspace defined by $\bmn B$. The length of the residual
\begin{equation}
\label{eq:3}
r_t^{(k)} = \norm{\Delta \bmn w_t^{(k)} - \bmn P_{\bmn B}\Delta \bmn w_t^{(k)}}_2
\end{equation}
characterizes whether the client $k$ is being attacked at round $t$, where $\bmn P_{\bmn B} = \bmn B\bmn B\T $ is the projection matrix onto the column space of $\bmn B$. 
Due to the high dimension of $\Delta \bmn w_t^{(k)}$, the expression $\bmn P_{\bmn B} \Delta \bmn w_t^{(k)}$ should be computed via the following equation, to avoid forming the matrix $\bmn P_{\bmn B}$ explicitly via $\bmn B\bmn B\T $ and thus avoid the expensive matrix-vector multiplications. This can decrease the computational burden from $O(p^2 q)$ to $O(pq)$. 
\[\bmn P_{\bmn B} \Delta \bmn w_t^{(k)} = \bmn B(\bmn B\T \Delta \bmn w_t^{(k)})\]
 
  \end{itemize}

In this scheme, the first $T_0$ rounds of communication is the start-up period of the monitoring scheme, and from the round $T_0 + 1$, the monitoring of the clients begins. 
  
\subsection {Data stream monitoring via non-parametric CUSUM procedure} \label{sec:cusum}

After the matrix $\bmn B$ is calculated after the first $T_0$ rounds, we are able to transform the updates after round $T_0$ into $r_t^{(k)}$. Our goal is to develop a monitoring scheme for identifying the clients under attack. To achieve this, we need an appropriate probabilistic model for $r_t^{(k)}$'s to assure the in-control adversarial detection
rate of the model. However, $r_t^{(k)}$'s are calculated based on intermediate updates in the optimization problem \ref{eq:fedavg}. The only knowledge on their distributions is that $r_t^{(1)}, \ldots, r_t^{(K)}$ are IID conditioning on the parameters at time $t$, $\bmn w_t$, given the IID data collected by different clients. This motivates us to transform the values of $r_t^{(k)}$ into the rank statistics
\[ \tilde{\bmn r}_t=[\tilde{r}_t^{(1)},\ldots,\tilde{r}_t^{(K)}], \]
where $\tilde{r}_t^{(k)} \in \{1,\ldots, K\}$ for all $k \in \{1,2,...,K\}$ are the rank of $r_t^{(k)}$ within $r_t^{(1)},\ldots, r_t^{(K)}$.
The following theorem describes the distribution of $\tilde {\bm r}_t, t=1,2,\ldots$.

    \begin{theorem}
      Assume that all samples from all clients are independent and that the population of each client satisfies $P_1 = P_2 =\cdots = P_K$ when there is no adversarial client. Then $\tilde{\bmn r}_t$ is a uniform random permutation, i.e., each of the $K!$ permutations is equally likely to appear. Moreover, $\tilde {\bmn r}_1, \tilde {\bmn r}_2, \ldots$ are all independent. 
      \label{thm}
    \end{theorem}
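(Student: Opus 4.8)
The plan is to condition on the past and then use symmetry. Fix a monitoring round $t$ — one for which the subspace matrix $\bmn B$ has already been estimated from the Phase~I rounds — and let $\mathcal G_{t-1}$ denote the $\sigma$-algebra generated by everything available before the local updates of round $t$ are produced: the Phase~I data (hence $\bmn B$) together with all client data and all internal randomness of the earlier rounds. In particular $\bmn w_{t-1}$, $\bmn B$, and $\tilde{\bmn r}_1,\ldots,\tilde{\bmn r}_{t-1}$ are all $\mathcal G_{t-1}$-measurable, and within round $t$ the matrix $\bmn B$ is a fixed constant. The first step is to show that, conditionally on $\mathcal G_{t-1}$, the residual vector $(r_t^{(1)},\ldots,r_t^{(K)})$ is exchangeable. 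Every client enters the round from the same parameter $\bmn w_{t-1}$ and runs the same SGD routine; the only client-specific inputs are the local dataset $\mathcal D_{t,k}$ and fresh internal randomness. Under the hypotheses that all samples are independent and $P_1=\cdots=P_K$, the datasets $\mathcal D_{t,1},\ldots,\mathcal D_{t,K}$ are i.i.d., so $\bmn w_t^{(1)},\ldots,\bmn w_t^{(K)}$ are conditionally i.i.d. given $\mathcal G_{t-1}$. Since the aggregate $\bmn w_t=\frac1K\sum_j \bmn w_t^{(j)}$ and the projection $\bmn P_{\bmn B}$ are symmetric in the client index, the map $(\bmn w_t^{(1)},\ldots,\bmn w_t^{(K)})\mapsto(r_t^{(1)},\ldots,r_t^{(K)})$ defined through $\Delta\bmn w_t^{(k)}=\bmn w_t^{(k)}-\bmn w_t$ and $r_t^{(k)}=\norm{\Delta\bmn w_t^{(k)}-\bmn P_{\bmn B}\Delta\bmn w_t^{(k)}}_2$ is permutation-equivariant, and an equivariant image of an exchangeable vector is exchangeable.

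The second step passes from exchangeability of the residuals to uniformity of their ranks. Assuming that ties occur with conditional probability zero (which holds, e.g., if the residuals have a continuous conditional law, or if a randomized tie-breaking rule is adopted), for any permutation $\sigma$ the event that the residuals are strictly increasing along the order $\sigma(1),\ldots,\sigma(K)$ has, by exchangeability, the same conditional probability as for any other permutation; these $K!$ events partition the sample space up to a null set, so each has conditional probability $1/K!$. Because $\{\tilde{\bmn r}_t=\rho\}$ is exactly one such ordering event, $\condP{\tilde{\bmn r}_t=\rho}{\mathcal G_{t-1}}=1/K!$ for every permutation $\rho$.

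The third step reads off both conclusions at once. The conditional law of $\tilde{\bmn r}_t$ given $\mathcal G_{t-1}$ is the uniform law on the $K!$ permutations, and this law does not depend on $\mathcal G_{t-1}$; hence $\tilde{\bmn r}_t$ is unconditionally uniform on the permutations and is independent of $\mathcal G_{t-1}$. Since $\tilde{\bmn r}_1,\ldots,\tilde{\bmn r}_{t-1}$ are $\mathcal G_{t-1}$-measurable, $\tilde{\bmn r}_t$ is independent of $(\tilde{\bmn r}_1,\ldots,\tilde{\bmn r}_{t-1})$; applying this at every round gives mutual independence of $\tilde{\bmn r}_1,\tilde{\bmn r}_2,\ldots$.

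I expect the main obstacle to be careful bookkeeping of the dependence structure rather than any hard estimate. The vectors $\Delta\bmn w_t^{(k)}$, and hence the residuals $r_t^{(k)}$, are \emph{not} independent across clients — they are coupled through the shared aggregate $\bmn w_t$ and the shared matrix $\bmn B$ — so the argument must run through conditional exchangeability rather than a naive independence claim, and the across-round independence has to be extracted from the ``constant conditional law'' principle rather than asserted directly. A secondary point that must be stated explicitly is the no-ties assumption (or the tie-breaking convention), without which $\tilde{\bmn r}_t$ need not even be a permutation.
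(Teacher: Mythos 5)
Your proof is correct, and for the harder half of the theorem---the mutual independence of $\tilde{\bmn r}_1, \tilde{\bmn r}_2,\ldots$---it takes a genuinely different route from the paper. For the uniformity claim the two arguments are the same in spirit: condition on the current global parameters, observe that the round-$t$ datasets are i.i.d.\ so the residual vector is conditionally exchangeable, and conclude that each of the $K!$ orderings has conditional probability $1/K!$ (your version is slightly more robust in that it only needs exchangeability rather than conditional i.i.d.-ness, which matters if the residuals are centered at the round-$t$ aggregate and are therefore coupled across clients). For independence, the paper works \emph{forward}: it proves a separate lemma (its Lemma~1) that any function of the round-$t$ data that is symmetric in the client index---in particular the aggregate $\bmn w_{t+1}$---is independent of the rank vector $\tilde{\bmn r}_t$, then observes that all future ranks are functions of $\bmn w_{t+1}$ together with fresh future data, and concludes that $\tilde{\bmn r}_t$ is independent of the future. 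You work \emph{backward}: you condition on the full past $\mathcal G_{t-1}$, show that the conditional law of $\tilde{\bmn r}_t$ is the constant uniform law, invoke the standard fact that a constant conditional law implies independence from the conditioning $\sigma$-algebra, and telescope. Your route is leaner---it needs no analogue of the paper's Lemma~1, because once you condition on the past the round-$t$ data is entirely fresh, whereas the paper must explicitly handle the fact that $\bmn w_{t+1}$ is built from the \emph{same} round-$t$ datasets that determine $\tilde{\bmn r}_t$. What the paper's approach buys is a reusable, self-contained statement about rank statistics and symmetric statistics of a sample; what yours buys is brevity and a cleaner bookkeeping of the dependence structure, together with an explicit statement of the no-ties caveat, which the paper only handles implicitly through the continuity hypothesis in its lemma.
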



The proof of this theorem is given in the appendix. This theorem shows that transforming the updates to rank statistics leads to a closed-form probabilistic characterization of the clients' updates, which facilitates the development of a monitoring scheme with prescribed in-control performance. When a client is adversarial, its updates are more likely to become far away from the principal subspace and thereby lead to a higher rank among all $K$ clients. 

We propose a non-parametric data stream monitoring approach for the rank statistics $\tilde {\bmn r}_t$'s. The idea is to derive the likelihood for every client to be adversarial and then aggregate them \citep{xian2018nonparametric}. Given that $\tilde {r}_t^{(k)}$ is uniformly distributed on $\{1,\ldots, K\}$, we propose to transform it to a standard Normal random variable through $Z_t^{(k)} = \Phi^{-1}\left(\frac{\tilde {r}_{t}^{(k)} -U}{K}\right)$, where $\Phi(\cdot)$ is the cumulative distribution function of standard Normal distribution and $U$ is a random variable generated from $U(0,1)$. 
Then, we apply the CUSUM procedure on the sequences $\{Z_t^{(k)}, t=T_0+1,T_0+2,\ldots\}$ for any $k=1,\ldots, K$ and let $ S_{T_0}^{(k)} = 0 $, and $S_t^{(k)} = (S_{t-1}^{(k)} + Z_t^{(k)} - d / 2)_{+}$, where $\{S_{T_0+1}^{(k)}, S_{T_0+2}^{(k)}, \ldots\}$ represent the monitoring statistic of client $k$ and $d$ is the reference value of CUSUM procedure \citep{montgomery2020introduction}.
Note that we are only interested in detecting the upward shift of $Z_t^{(k)}$ with this formulation of $S_t^{(k)}$. 
Finally, we aggregate these local monitoring statistics by $S_t = \max\{S_t^{(k)}: k = 1,...,K\}$. If $S_t > H$, the alarm of adversarial attack is triggered, and we conclude that client $k^* = \argmax_{k}(S_t^{(k)})$ be the adversarial client, where $H$ is a threshold selected to yield an in-control expected ARL of $ARL_0$.  

\textbf{Determination of $H$ based on targeted in-control ARL. } An advantage of using rank statistics in the FedRR procedure is that the threshold $H$ can be obtained through an efficient offline simulation based on the target in-control $ARL_0$, the value of $d$, and the number of clients $K$. This simulation procedure only involves the sequential random permutations \textit{before} the FL process starts. Specifically, we can approximate the in-control ARL of a given threshold $H$ with the procedure given in Appendix B.

\subsection{Summary and discussions}\label{sec:proc}

The two stages of the FedRR procedure can be integrated into the FedAvg procedure as summarized in Algorithm~\ref{FedRR}, where the new steps are marked with brackets with stars (*). In this procedure, there are several tuning parameters that need to be determined: 

\begin{itemize}
    \item The number of start-up communication rounds $T_0$. For the optimal performance of the threat detection procedure, $T_0$ should be selected large enough to adequately describe the space of the updates so as to estimate the principal components $\bmn b_j$'s accurately. However, no adversarial attack should occur in the start-up period.  
    \item The dimension of the subspace $q$. The value of $q$ should be adjusted to account for a prescribed percentage, for example, 95\% or 99\%, of the variance from the $T_0 K$ updates generated from the start-up period. 
    \item The reference value of CUSUM procedure $d$. The value of $d$ should be adjusted based on the magnitude of the effect of the adversarial attack. It should be half the expected value of $Z_t^{(k)}$ after the attack happens. 
\end{itemize}

It is worth noting that regardless of the choice of $T_0$ and $q$, the FedRR approach can always achieve the targeted in-control ARL with the appropriate threshold $H$, as indicated by Theorem~\ref{thm}. However, an appropriate selection of $T_0$ and $q$ would increase the power of the adversarial client detection. 


As we presented in the introduction, the FedRR procedure relies on two key ideas: (1) dimensional reduction of the updates and (2) using rank statistics for comparing the updates from multiple clients. Based on these ideas, there are multiple potential extensions of the algorithm: 

\begin{itemize}
    \item Both ideas (1) and (2) can be extended and adapted to other centralized FL algorithms in processing the updates submitted by all clients. For decentralized FL systems, however, each client may apply the dimensional reduction on the updates submitted by other clients and calculate their ranks. Nevertheless, an aggregation of the decision from all decentralized clients is required and needs further studies. 
    \item Many IoFT systems in practice only have a few clients participating in each communication round. Under this setup, the ideas of FedRR are still applicable, while the ranks of those clients not participating in the updates should be regarded as missing data, and the monitoring approach needs to be modified. The ideas of data stream monitoring with partial observations can be borrowed \citep{liu2015adaptive}. 
    \item A major limitation of FedRR is the  assumptions of homogeneous clients. When the clients are heterogeneous, 
    the idea of the FedRR can be adapted if we can find appropriate summary statistics of the updates that satisfy the following conditions: (1) they decrease the dimension of the clients' updates, (2) they have the same distribution for all clients when they are not under attack, and (3) they have significantly different distributions when attacks occur. Intuitively, the summary statistics extract the common features from the samples collected from the heterogeneous clients. We will also leave this problem for future research. 

\end{itemize}





\spacingset{1.0} 
\RestyleAlgo{ruled}
\SetKwBlock{Server}{Network executes:}{end}
\SetKwProg{Init}{Initialization:}{}{}
\SetKwProg{Learn}{Learning Stage:}{}{}
\SetKwProg{Monitor}{Monitoring Stage:}{}{}

\begin{algorithm}
\label{FedRR}
\DontPrintSemicolon

    \Init{}{
        When $t=0$, the server initializes $\bmn w_0$ and $\Delta\bmn W\in \RR^{p \by 0}$. Prescribe $H$ using simulation. \;
    }
    \For (\tcc*[f]{Phase I: start up}) {$t = 1, \ldots, T_0$} {
The server transmits $\bmn w_{t-1}$ to all clients $k = 1,\ldots, K$.\;
\Loc{}{
\For{$k = 1,\ldots,K$}{
      The client $k$ updates the model parameters from $\bmn w_{t-1}$  using data $D_{t,k}$ with SGD procedure and obtains $\bmn w_{t}^{(k)}$.\;
      The client $k$ transmits the updated parameters $\bmn w_{t}^{(k)}$ to the server.\; 
    }
}
\Agg{}{
      {\color{red} The server collects the updates in matrix \tikzmark{top1} \; $\Delta\bmn W\gets [
                    \Delta\bmn W \  
                    \Delta\bmn  w_t^{(1)} \ \cdots\  \Delta\bmn w_t^{(K)} ]$,  where $\Delta \bmn w_t^{(k)} = \bmn w_t^{(k)} - \bmn w_{t-1}$ }. \tikzmark{bottom1} \tikzmark{right1}\;
The server aggregates the new parameters from clients $\bmn w_{t} = \frac 1 K \sum_{k=1}^{K} \bmn w_{t}^{(k)}$. \;
}
}
     {\color{red} The server applies PCA to the columns of $\Delta\bmn W\in \mathbb{R}^{p \by T_0 K }$ and obtains the first \tikzmark{top2} \tikzmark{right2} \;$q$   principal components, stored in the columns of $\bmn B = [\bmn b_1 \cdots \bmn b_q]\in \mathbb{R}^{p \by q}$. } \tikzmark{bottom2}\; 

        \For(\tcc*[f]{Phase II: monitoring}){$t = T_0 + 1, T_0 + 2, \ldots$}{
The server transmits the current $\bmn w_{t-1}$ to all clients $k = 1,..., K$.\;
\Loc{}{
\For {$k = 1,\ldots,K$}{
      Server $k$ updates the model parameters from $\bmn w_{t-1}$  using data $D_{t,k}$ with SGD procedure and obtains $\bmn w_{t}^{(k)}$.\;
      The client $k$ transmits the updated parameters $\bmn w_{t}^{(k)}$ to the server.\; 
}}
\Agg{}{
 {
 \color{red}
 
The server calculates $r_t^{(k)} = \norm{\Delta \bmn w_t^{(k)} - \bmn B(\bmn B\T \Delta \bmn w_t^{(k)}) }_2$ for all clients. 
          \tikzmark{top3} \tikzmark{right3}\; 
Transform $(r_t^{(1)}, \ldots, r_t^{(k)})$ to rank statistics $\left(\tilde{r}_t^{(1)},\ldots, \tilde r_t^{(k)} \right)$.\; 
 \For{$k=1,\ldots,K$}{
            Generate $U\sim U(0,1)$ and calculate $Z_t^{(k)} = \Phi^{-1}\left(\frac{\tilde {r}_t^{(k)}-U}{K}\right)$  and              $S_t^{(k)} = (S_{t-1}^{(k)} + Z_t^{(k)} - d/2)_{+}$.\; \label{FedRR:genZ}      
}
\If{$\max\{S_t^{(k)}: k = 1,...,K\} > H$}{
                Adversarial client $k$ is detected  at time  $t$. 
                \tikzmark{bottom3}      
            }          
}
    The server aggregates the updates from all clients $\bmn w_{t} = \frac 1 K \sum_{k=1}^{K} \bmn w_{t}^{(k)}$
}
}
\caption{FedRR Algorithm}
\AddNote{top1}{bottom1}{right1}{(*)}
\AddNote{top2}{bottom2}{right2}{(*)}
\AddNote{top3}{bottom3}{right3}{(*)}
\end{algorithm}

\spacingset{1.5}







\section{Experiments}

We conducted experiments to investigate and demonstrate the effectiveness of the FedRR approach. This experiment involves identifying numbers 0-9 from their images, through training samples collected from multiple clients. The experimental setups are introduced as follows. 

\noindent \textbf{Dataset.} We utilize the MNIST dataset \citep{lecun1998gradient} in this experiment. The MNIST dataset consists of a collection of $60,000$ training image samples and $10,000$ test samples. Each sample represents a grayscale handwritten digit ranging from 0 to 9, portrayed as a $28 \times 28$ pixel image with a single channel. Additionally, each sample is assigned a label corresponding to a digit between 0 and 9. 

\noindent \textbf{Model and the Federated Learning system.} Our experiments are performed with a CNN model with two convolution layers, two max-pooling layers, and two fully connected layers. This CNN model contains a total of $4,301,080$ parameters. The detailed model architecture is illustrated in Figure~\ref{cnn}. We study the case where there are $K=5$ clients in the FL system, the data owned by different clients are IID. The model training is performed using  the SGD algorithm with a fixed learning rate of $\eta = 0.001$. In each communication round, there are a total of $3$ epochs with  minibatches of size $128$.

\begin{figure}[!htbp]
  \centering
  \includegraphics[scale=0.16]{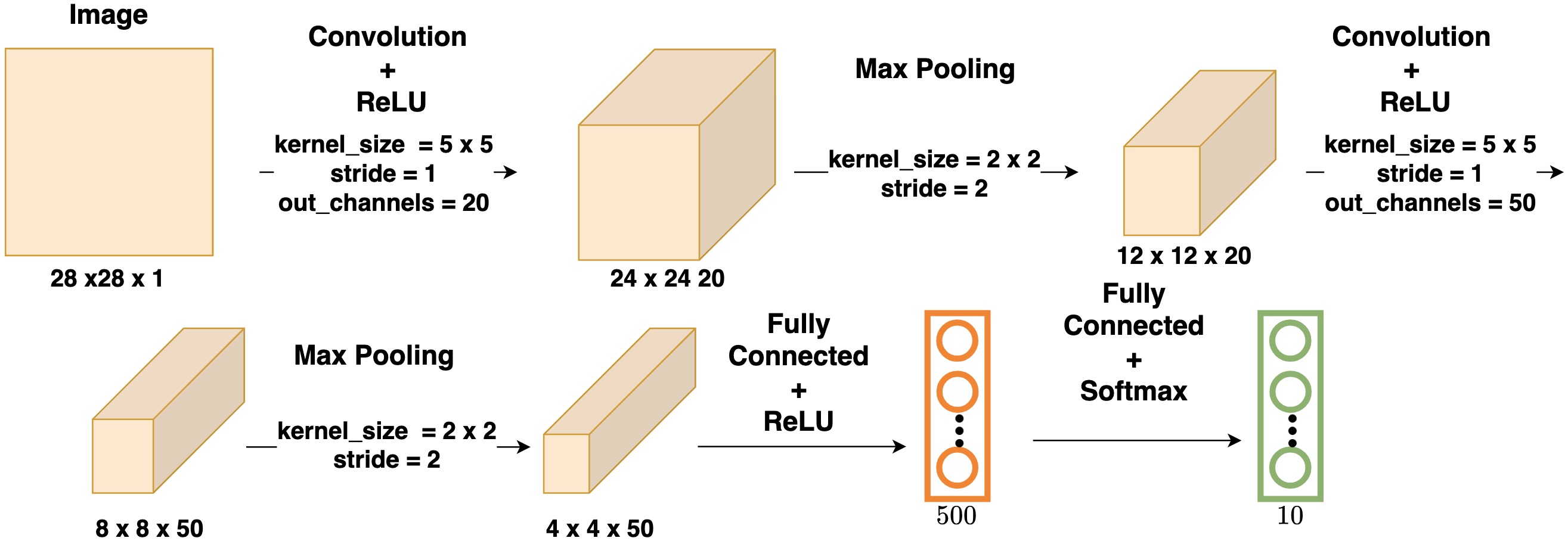}
  \caption{CNN Architecture}
  \label{cnn}  
\end{figure}


\noindent \textbf{Adversarial attack scenarios. } In this study, our aim is to evaluate the detection of adversarial attacks in the training process, instead of evaluating the model performance after training. We consider the following adversarial attack scenario: 5 clients collect both the images of the digits and the labels and collectively train a digit recognition model. 
In the first $T_0 = 50$ communication rounds, there are no adversarial attacks. During this time, the server estimates the updates' principal subspaces. From the communication round $T_0 +1$, we introduce adversarial attacks on one of the clients. We study three adversarial attacks in our experiments. 

\begin{enumerate}[(1)]
    \item \textit{Label flipping attack.} The labels of honest training examples of one class are randomly flipped to another class while the features of the data are kept unchanged. We control the proportion of data samples subject to label flipping attack on the adversarial client to $ratio=\frac{1}{250}$ 
    in our experiment. 
    \item \textit{Poisoning samples attack.} We add some noise to the local dataset's input image. We choose to add Gaussian noise $N(0.01,0.1)$ on every pixel of the image in our experiment.  
    \item \textit{model poisoning attack.} We add some noise to local model parameters before transmitting them to the server. The model poisoning attack in our experiment is deployed by adding independent Gaussian noise $N(0,0.0001)$ to every element of $\bmn w_t^{(k)}$. 
\end{enumerate}

\noindent \textbf{The benchmark methods. }
Recall that our proposed approach offers two benefits: a guaranteed false alarm rate when no attack is present due to the non-parametric approach, and the capability for effective detection through the projection on low-rank subspaces. Note that the first benefit among them is a basic requirement of all applicable monitoring schemes. Therefore, we will only compare our FedRR approach with a benchmark method not leveraging the knowledge  of low-rank updates.

To demonstrate the superiority of employing residuals to a low-dimensional subspace within the FedRR framework, we consider a benchmark method where the  updates' norms are monitored. Compared with the FedRR approach, Phase I of the algorithm is removed, and the residuals $r_t^{(k)}$ is substituted with $\norm{\Delta \bmn w_t^{(k)}}_2$, the magnitude of the change of parameters. 
We inherit the use of the non-parametric CUSUM procedure from FedRR to the benchmark method, so that the same error rate of the FedRR approach and the benchmark can be achieved without attacks, and thus the outcomes are comparable. 

\subsection{Validate the low-rank assumption of the updates }

In our preliminary study, we first validate the assertion from \citep{azam2021recycling} that the subspaces spanned by the transmitted updates generated by all clients during multiple communication rounds are low-dimensional. We save the transmitted updates of five clients generated during the first $500$ communication rounds, $\{\Delta \bmn w_t^{(k)}: k=1,\ldots, 5; t=1,2,\ldots, 500\}$. We then concatenate $\Delta \bmn w_t^{(k)}$ by column to form $\Delta \bmn W\in \mathbb{R}^{p \times (500\by 5)}$, where $p$ is the number of model parameters. We then perform principal component analysis on the first $TK$ columns of $\Delta \bmn W$, for $T=1,\ldots, 500$. Figure~\ref{low_rank} displays the number of components accounting for a specific percentage of the explained variance versus the total number of updates generated during FL training. Since $TK \ll p$, we know that the dimension of the subspace generated by the transmitted updates is at most $TK$. We discovered that the number of principal components required to explain the variance from 95\% to 99.9\% is significantly lower than $TK$, the total number of updates generated during model training. 
It indicates the potential of monitoring the residual 
for  detecting potential adversarial clients effectively.

\begin{figure}[!htbp]
  \centering
  \includegraphics[scale=0.58]{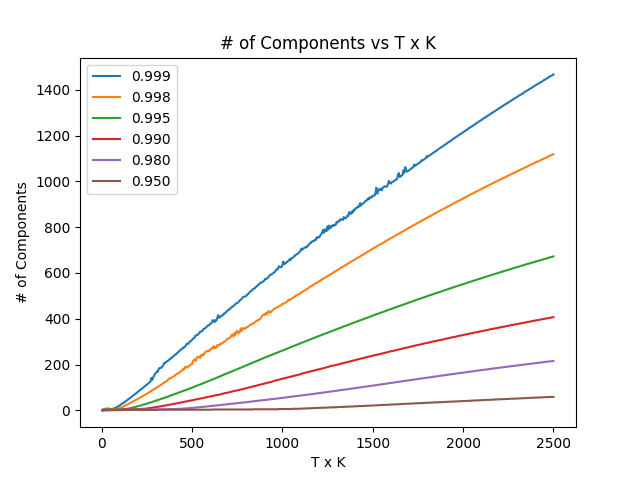}
  \caption{Low Rank Property}
  \label{low_rank} 
\end{figure}

\subsection {Compare the results between FedRR and the benchmark}
In our experiment, we compare the FedRR and the benchmark method with three values of $d$: 0.4, 0.5, and 0.6. For each case, we adjust the value of control limit $H$ using the approach detailed in Appendix B, and set the no-attack ARL of both the FedRR approach and the benchmark approach to be $ARL_0 = 30$. This value is smaller than most setups of the SPC literature due to the practical reason of long training time of the CNN model. 

The experimental results are shown in Table \ref{table:1}.  For all three adversarial attack modes, FedRR outperforms the benchmark method. Particularly in the case of the model poisoning attack mode, FedRR demonstrates significant superiority over the benchmark method. The model poisoning attack, which directly poisons the model parameters, induces a greater tendency for the transmitted updates of adversarial clients to deviate from the subspace spanned by the normal transmitted updates compared to the other two attack modes, which directly poisons the dataset and subsequently indirectly impact the model parameters. The benchmark method solely concentrates on the magnitude of the transmitted updates, failing to capture the information regarding deviations from the subspace spanned by the normal transmitted updates. Consequently, it exhibits lesser sensitivity compared to our method.



\begin{table}
\centering

\begin{tabular}{|c|l|l|l|l|l|}
\hline
\multicolumn{1}{|l|}{}                                                                          & $d$   & $ARL_0$ & $H$    & $ARL$ (FedRR) & $ARL$ (benchmark) \\ \hline
\multirow{3}{*}{\begin{tabular}[c]{@{}c@{}}Label Flipping Attack\\ ratio = 1/250\end{tabular}}  & 0.4 & 30   & 3.84 & 4.79(0.87) & 8.09(3.26)     \\ \cline{2-6} 
                                                                                                & 0.5 & 30   & 3.28 & 4.71(1.10) & 8.11(4.20)     \\ \cline{2-6} 
                                                                                                & 0.6 & 30   & 2.77 & 4.36(1.06) & 7.88(3.75)     \\ \hline
\multirow{3}{*}{\begin{tabular}[c]{@{}c@{}}Poisoning Samples Attack\\ $N(0.01,0.1)$\end{tabular}} & 0.4 & 30   & 3.84 & 5.05(1.43) & 8.44(2.71)     \\ \cline{2-6} 
                                                                                                & 0.5 & 30   & 3.28 & 4.72(1.87) & 8.12(2.71)     \\ \cline{2-6} 
                                                                                                & 0.6 & 30   & 2.77 & 4.63(1.86) & 8.30(3.06)     \\ \hline
\multirow{3}{*}{\begin{tabular}[c]{@{}c@{}}Model Poisoning Attack\\ $N(0,0.0001)$\end{tabular}}   & 0.4 & 30   & 3.84 & 4.37(0.82) & 21.10(3.39)    \\ \cline{2-6} 
                                                                                                & 0.5 & 30   & 3.28 & 4.33(0.86) & 22.25(3.69)    \\ \cline{2-6} 
                                                                                                & 0.6 & 30   & 2.77 & 4.14(1.21) & 21.75(3.63)    \\ \hline
\end{tabular}
\caption{The ARLs of the three setups when using FedRR and the benchmark method. The values in brackets are the standard deviation of the ARL. }
\label{table:1}
\end{table}

\section{Conclusion}
The security issue is a major concern for the wide adoption of IoFT. Adversarial client detection is an important and effective approach for eliminating attacks in FL systems, while the high dimension  and the lack of probability description of the transmitted updates are two major challenges. In this study, we propose a FedRR monitoring scheme that harnesses the low dimensional features of the space spanned by the transmitted updates. By reducing the dimension of transmitted updates and transforming them into rank statistics, FedRR enables effective data stream monitoring via a non-parametric CUSUM procedure. This framework offers distinct advantages over alternative approaches, as it exhibits the ability to accurately identify adversarial clients with precise ARL under attack-free scenarios. Experiments are conducted to compare FedRR with benchmark methods on the MNIST dataset, which demonstrates the swift detection capabilities of FedRR when adversarial attacks occur. Future endeavors will encompass scenarios where clients are heterogeneous or samples collected by multiple clients over time are correlated.

\section*{Appendix A: The Proof of Theorem~\ref{thm}}

In the FedAvg procedure, we have the recursive relationship
\begin{equation}
  \label{eq:iteration}
\bmn w_t^{(k)} = \bmn F(\bmn w_t; \mathcal{D}_{t,k}), \quad \bmn w_{t+1} = \frac 1 K \sum_{k=1}^K \bmn w_t^{(k)}   
\end{equation}
for all $t = 1,2,\ldots$ and $k = 1,\ldots, K$. Here $F(\bmn w; \mathcal{D})$ refers to the process that a client calcualte the updated model parameters from the global model parameters $\bmn w$ and the data $\mathcal{D}$. The initial aggregated model parameters is  $\bmn w_1$. The residual
\begin{equation}
  \label{eq:residual}
r_t^{(k)} = \norm{\Delta \bmn w_t^{(k)} - \bmn P_{\bmn B} \Delta \bmn w_t^{(k)}} = \norm{(\bmn I - \bmn P_{\bmn B}) (\bmn F(\bmn w_t; \mathcal{D}_{t,k})-\bmn w_t)} := G(\bmn w_t; \mathcal{D}_{t,k}).  
\end{equation}

\begin{enumerate}[(1)]
\item Since $\mathcal D_{t,1},\ldots, \mathcal D_{t,K}$ are IID, we know that  $r_t^{(1)},\ldots, r_t^{(K)} | \bmn w_t$ are IID. Therefore, $P(\tilde {\bmn r}_t = \bmn p|\bmn w_t) = 1/K!$ for any permutation $\bmn p$, which leads to  
  \[P(\tilde {\bmn r}_t = \bmn p) = \int P(\tilde {\bmn r}_t = \bmn p|\bmn w_t)p(\bmn w_t) d\bmn w_t = \int \frac 1 {K!} p(\bmn w_t) d\bmn w_t = \frac 1 {K!}. \]
\item Denote $\bmn r_t = (r_t^{(1)},\ldots, r_t^{(K)})$ and $\mathcal D_{t} = \{\mathcal D_{t,1},\ldots, \mathcal D_{t,K}\}$, and we define $\bmn r_t = \{G(\bmn w_t; \mathcal D_{t,k})\}_{k=1}^K$.

  We first claim that $\bmn w_{t+1}$ and $\tilde{\bmn r}_t$ are independent, according to Lemma~\ref{lem} below, by letting $N = K$, $\bmn X_k = \mathcal D_{t,k}$, $\bmn Y = \bmn w_{t}$, $\bmn w_{t+1} = \bmn h(\bmn w_t, \mathcal D_{t,1},\ldots, \mathcal D_{t,K})$, and $g(\cdot,\cdot)=G(\cdot,\cdot)$. 

  \begin{lemma}
    Let $\bmn X_1,\ldots, \bmn X_N|\bmn Y$ be IID where $F(\cdot)$ is a continuous distribution on set $\mathcal X$. Let $g(\bmn x)$ be a continuous function $\mathcal X\to\RR$ and $\bmn R$ the rank statistics of $g(\bmn Y, \bmn X_1),\ldots,g(\bmn Y, \bmn X_N)$. Let $\bmn h(\bmn y, \bmn x_1,\ldots, \bmn x_N)$ be symmetric on $\bmn x_1, \ldots, \bmn x_N$, i.e.,
    \[\bmn h(\bmn y, \bmn x_1,\ldots, \bmn x_N) = \bmn h(\bmn y, \bmn x_{\bmn p(1)},\ldots, \bmn x_{\bmn p(N)})\]
    for any permutation $\bmn p$ of $[N]$. Then $\bmn h(\bmn Y, \bmn X_1,\ldots, \bmn X_N)$ and $\bmn R$ are independent.
    \label{lem}
  \end{lemma}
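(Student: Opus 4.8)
The plan is to condition on $\bmn Y$ and exploit the exchangeability of $\bmn X_1,\ldots,\bmn X_N$ given $\bmn Y$, together with the two invariance properties at play: $\bmn h(\bmn y,\cdot)$ is \emph{invariant} under permutations of its last $N$ arguments, whereas the rank vector $\bmn R$ is \emph{equivariant} --- relabelling the sample by $\bmn\tau\in S_N$ sends $\bmn R$ to $\bmn R\circ\bmn\tau$. Fix a value $\bmn Y=\bmn y$; conditionally on $\{\bmn Y=\bmn y\}$ the $\bmn X_i$ are i.i.d.\ $\sim F$, so $(\bmn X_1,\ldots,\bmn X_N)$ is exchangeable, i.e.\ $(\bmn X_{\bmn\tau(1)},\ldots,\bmn X_{\bmn\tau(N)})$ has the same conditional law as $(\bmn X_1,\ldots,\bmn X_N)$ for every $\bmn\tau$. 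I will first dispose of ties so that $\bmn R$ is a genuine (a.s.\ unique) permutation, then run a short symmetrization argument.

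Writing $\bmn X_{1:N}=(\bmn X_1,\ldots,\bmn X_N)$ and $\bmn X_{\bmn\tau(1:N)}=(\bmn X_{\bmn\tau(1)},\ldots,\bmn X_{\bmn\tau(N)})$, the crux is the identity, valid for all bounded measurable $\phi,\psi$,
\[
  \condE{\phi\big(\bmn h(\bmn Y,\bmn X_{1:N})\big)\,\psi(\bmn R)}{\bmn Y}
  \;=\;
  \condE{\tfrac{1}{N!}\textstyle\sum_{\bmn\tau\in S_N}\phi\big(\bmn h(\bmn Y,\bmn X_{\bmn\tau(1:N)})\big)\,\psi(\bmn R\circ\bmn\tau)}{\bmn Y},
\]
which holds because exchangeability makes every summand on the right have the same conditional expectation as the left-hand side. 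Inside the sum, symmetry of $\bmn h$ gives $\bmn h(\bmn Y,\bmn X_{\bmn\tau(1:N)})=\bmn h(\bmn Y,\bmn X_{1:N})$, and for the realized (a.s.\ genuine) permutation $\bmn R$ the map $\bmn\tau\mapsto\bmn R\circ\bmn\tau$ runs over $S_N$ bijectively, so $\tfrac1{N!}\sum_{\bmn\tau}\psi(\bmn R\circ\bmn\tau)=\tfrac1{N!}\sum_{\bmn p\in S_N}\psi(\bmn p)$ is a deterministic constant. Hence the right-hand side equals $\big(\tfrac1{N!}\sum_{\bmn p}\psi(\bmn p)\big)\,\condE{\phi(\bmn h(\bmn Y,\bmn X_{1:N}))}{\bmn Y}$; taking expectations and then specializing to $\phi\equiv1$ identifies $\tfrac1{N!}\sum_{\bmn p}\psi(\bmn p)=\EE[\psi(\bmn R)]$, so $\EE[\phi(\bmn h)\psi(\bmn R)]=\EE[\phi(\bmn h)]\,\EE[\psi(\bmn R)]$, i.e.\ $\bmn h(\bmn Y,\bmn X_{1:N})\perp\bmn R$. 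An equivalent, more visual route: conditionally on $\bmn Y$ and on the $g$-ordered rearrangement $\bmn X_{()}$ of the sample, $\bmn R$ is uniform on $S_N$ by exchangeability, while $\bmn h(\bmn Y,\bmn X_{1:N})=\bmn h(\bmn Y,\bmn X_{()})$ is a measurable function of $(\bmn Y,\bmn X_{()})$; since the conditional law of $\bmn R$ given $\bmn Y$ is uniform for every $\bmn y$, integrating out $\bmn Y$ preserves the conclusion.

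The step I expect to be the actual obstacle is the measure-theoretic fine print, not the symmetrization: (i) one must verify that $\bmn R$ is a.s.\ unambiguous, i.e.\ $\PP(g(\bmn y,\bmn X_i)=g(\bmn y,\bmn X_j))=0$ for $i\neq j$, which needs the conditional law of $g(\bmn y,\bmn X_1)$ to be atomless --- this is exactly where the continuity of $F$ and of $g$ enters, and the hypothesis should be stated with care, since continuity of $g$ by itself does not prevent $g(\bmn y,\cdot)$ from being constant on a set of positive $F$-measure; and (ii) since the updates live in an abstract space $\mathcal X$, I would make all the ``conditionally on $\bmn Y$'' statements precise by fixing a regular conditional distribution of $\bmn X_{1:N}$ given $\bmn Y$ and phrasing exchangeability as an invariance of that kernel under coordinate permutations. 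With (i)--(ii) secured, the identity above closes the proof immediately, and Theorem~\ref{thm}(2) follows by the stated substitution ($N=K$, $\bmn X_k=\mathcal D_{t,k}$, $\bmn Y=\bmn w_t$, $\bmn h$ the FedAvg aggregation map, $g=G$), yielding $\bmn w_{t+1}\perp\tilde{\bmn r}_t$ and hence, by induction on $t$, the mutual independence of $\tilde{\bmn r}_1,\tilde{\bmn r}_2,\ldots$.
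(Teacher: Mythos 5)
Your argument is essentially the paper's own proof: both condition on $\bmn Y$, use the conditional exchangeability of $\bmn X_1,\ldots,\bmn X_N$ together with the permutation-invariance of $\bmn h$ and the equivariance of the rank vector to show that the conditional joint law of $\bigl(\bmn h(\bmn Y,\bmn X_1,\ldots,\bmn X_N),\bmn R\bigr)$ factors with $\bmn R$ uniform on the $N!$ permutations, and then integrate out $\bmn Y$; your test-function symmetrization over all of $S_N$ is just a rephrasing of the paper's single variable substitution $(\bmn Z_1,\ldots,\bmn Z_N)=(\bmn X_{\bmn p(1)},\ldots,\bmn X_{\bmn p(N)})$. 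Your observation that almost-sure uniqueness of the ranks requires the conditional law of $g(\bmn y,\bmn X_1)$ to be atomless (which continuity of $g$ alone does not guarantee) is a fair point that the paper's proof also passes over silently, but it does not change the substance of the argument.
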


  \begin{proof}
    For any permutation $\bmn p$ and set $A\subset \mathcal X$, we have
    \[P(\bmn h(\bmn Y, \bmn Z_1,\ldots, \bmn Z_N)\in A,\bmn R_{\bmn Z} =\tilde{\bmn r}| \bmn Y) = P(\bmn h(\bmn Y, \bmn X_{\bmn p(1)},\ldots, \bmn X_{\bmn p(N)}) \in A,\bmn R_{\bmn X\circ \bmn p} = \tilde{\bmn r}
      | \bmn Y)\]
    by considering the variable substitution $(\bmn Z_1, \ldots, \bmn Z_N) = (\bmn X_{\bmn p(1)},\ldots, \bmn X_{\bmn p(N)})$. Therefore, for any permutation $\tilde{\bmn r}, \tilde{\bmn r}'$, we have
    \[P(h(\bmn Y, \bmn X_1,\ldots, \bmn X_N)\in A,\bmn R_X = \tilde{\bmn r} | \bmn Y) = P(h(\bmn Y, \bmn X_1,\ldots, \bmn X_N) \in A,\bmn R_Z = \tilde{\bmn r}'      | \bmn Y)). 
    \]
    which indicates
    \[P(h(\bmn Y, \bmn X_1,\ldots, \bmn X_N)\in A,\bmn R_X = \tilde{\bmn r} | \bmn Y)=\frac 1 {N!} P(h(\bmn Y, \bmn X_1,\ldots, \bmn X_N )\in A|\bmn Y). \]
    Then integrate out $\bmn Y$ we have
    \begin{align*}
      P(h(\bmn Y, \bmn X_1,\ldots, \bmn X_N)\in A,\bmn R_X = \tilde{\bmn r}) =& \EE_{\bmn Y} P(h(\bmn Y, \bmn X_1,\ldots, \bmn X_N)\in A,\bmn R_X = \tilde{\bmn r}|\bmn Y)      \\
      =& \EE_{\bmn Y} \frac{1}{N!}P(h(\bmn Y, \bmn X_1,\ldots, \bmn X_N )\in A|\bmn Y) \\
      =& \frac{1}{N!}P(h(\bmn Y, \bmn X_1,\ldots, \bmn X_N )\in A)
    \end{align*}
For any permutation $\tilde{\bmn r}$ of $[N]$, clearly $P(\bmn R = \tilde{\bmn r}) = 1/N!$, which proof the Lemma. 
  \end{proof}

  We then claim that $\tilde{\bmn r}_t$ is independent with $(\tilde{\bmn r}_{t+1},\ldots, \tilde{\bmn r}_{t+s})$ for any $t, s>0$. From Equations \ref{eq:iteration} and \ref{eq:residual} , we know $\tilde{\bmn r}_{t+s'}$ is a function of $\bmn w_{t+s'-1},\mathcal D_{t+s'-1}$, and thereby a function of $\bmn w_{t+s'-2},\mathcal D_{t+s'-2},\mathcal D_{t+s'-1}$, ..., and finally a function of $\bmn w_{t+1}$, $\mathcal D_{t+1}$,..., $\mathcal D_{t+s'}$. So $\tilde{\bmn r}_{t+1},...,\tilde{\bmn r}_{t+s}$ is a function of $\bmn w_{t+1}$, $\mathcal D_{t+1}$,..., $\mathcal D_{t+s}$.
  \begin{align*}
    &p(\tilde{\bmn r}_t, \bmn w_{t+1},  \mathcal D_{t+1},\ldots, \mathcal D_{t+s})     \\
    =& p(\tilde{\bmn r}_t, \bmn w_{t+1}) p(\mathcal D_{t+1},\ldots, \mathcal D_{t+s})     \\
    =& p(\tilde{\bmn r}_t)p(\bmn w_{t+1})p(\mathcal D_{t+1},\ldots, \mathcal D_{t+s})     
  \end{align*}
  So $\tilde{\bmn r}_t$ is independent with $\bmn w_{t+1}, \mathcal D_{t+1},\ldots, \mathcal D_{t+s}$. 
  Therefore the claim is validated.

  Finally, we note that
  \[p(\tilde{\bmn r}_1,\tilde{\bmn r}_2,\ldots, \tilde{\bmn r}_t) = p(\tilde{\bmn r}_1)p(\tilde{\bmn r}_2,...,\tilde{\bmn r}_t) = \cdots = \Pi_{i=1}^t p(\tilde{\bmn r}_i). \]
  So all $\tilde{\bmn r}_i$'s are independent.

\end{enumerate}

\section*{Appendix B: The algorithm of finding the control limit}
 Algorithm~\ref{Control Limit} evaluates the ARL of a control scheme without attack given subscribed control limit $H$. The control limit corresponding to a prescribed ARL can be found using an iterative search algorithm, such as a bisection search. 

 

\spacingset{1.0} 
\RestyleAlgo{ruled}
\SetKwProg{Init}{Initialization:}{}{}

\begin{algorithm}
\label{Control Limit}
\DontPrintSemicolon
\KwIn{
Control limit $H$, total number of run lengths $M$, number of clients $K$
}
\For{$m = 1,\ldots,M$}{
    \Init{}{
        When $t=0$, initialize $S_t^{(k)}=0$ for $k=1,\ldots, K$.\;    
    }
    \For{$t = 1,2,\ldots,$}{
        Sample $\tilde{\bm r}_t$ as independent random permutation of $\{1,\ldots, K\}$. \;
        Calculate $Z_t^{(k)}$ from  $\tilde{\bm r}_t$ and update $S_t^{(k)}$ according to line~\ref{FedRR:genZ} of Algorithm~\ref{FedRR}.
        \If{$\max\{S_t^{(k)}: k = 1,...,K\} > H$}{
            Stop and record the run lengths $R_m=t$.\;
        }
    }
    Collect $M$ run lengths and calculate estimated ARL $=\frac 1 M \sum_{m=1}^M R_m$. 
}

\caption{Finding Control Limit}
\end{algorithm}

\newpage

\bibliographystyle{apalike}
\spacingset{1}

\bibliography{IISE-Trans}
\end{document}